\documentclass{article}
\usepackage{graphicx}
\usepackage{hyperref} 

\usepackage{amsmath, amsthm, amssymb}
\usepackage{tikz}
\usepackage{subcaption}
\usetikzlibrary{shapes.geometric}
\usetikzlibrary{angles,quotes}
\usetikzlibrary{calc} 
\usetikzlibrary{arrows.meta, positioning}
\usepackage{cleveref}

\usepackage{tcolorbox}
\tcbuselibrary{skins}
\usepackage{dashrule} 
\usepackage{xcolor}
\usepackage{wrapfig}
\usepackage{fullpage}
\usepackage{natbib}

\usepackage{booktabs}
\usepackage{nicematrix}

\newtheorem{theorem}{Theorem}

\newtheorem{lemma}[theorem]{Lemma}

\newtheorem{corollary}[theorem]{Corollary}

\newtheorem{remark}[theorem]{Remark}

\title{Parity, Sensitivity, and Transformers}

\author{Alexander Kozachinskiy\\ CENIA\\ \texttt{alexander.kozachinskyi@cenia.cl} \and Tomasz Steifer\\IPPT PAN\\\texttt{tsteifer@ippt.pan.pl} \and Przemys{\l}aw Andrzej Wa{\l}\c{e}ga \\ Queen Mary University of London \\\texttt{p.walega@qmul.ac.uk}}

\begin{document}
	\maketitle
	
	\begin{abstract}%
		
		Understanding what neural architectures can and cannot compute is a central challenge in the theory of AI. One of the fundamental problems in this context is the PARITY task, which asks whether the number of 1s in a binary input sequence is even or odd. PARITY is one of the central tasks studied in the theory of computation, yet it remains surprisingly unclear under which conditions transformers can or cannot solve it.
		
		In this paper, we show that the minimal number of layers a transformer needs to compute PARITY is two. In particular, we solve the open problem asking whether a one-layer transformer can compute PARITY.  
		We answer it negatively by showing  that average sensitivity of a one-layer transformer grows slower than that of PARITY.
		Furthermore, we show a new construction for transformer that computes PARITY, which improves on the existing constructions by removing a number of impractical assumptions.
		In particular, the existing transformers for PARITY rely on such impractical assumptions as  length-dependent positional encoding, hardmax, layernorm without a regularisation parameter, or incompatibility with causal masking. We show that these assumptions can be removed, at the cost of increasing the number of layers from two  to four. Specifically, we show that PARITY can be computed by a four-layer transformer  using softmax attention, length-independent and polynomially bounded positional encoding, no layernorm, and compatible with both causal and non-causal masking.
		These results advance our understanding of the computational capabilities of transformer architectures by precisely characterizing the depth requirements for PARITY under realistic design constraints. 
	\end{abstract}

	\section{Introduction}
	Imagine that you are trying to train a neural network---or a related model such as transformer, Graph Neural Network etc.---on some task and you see that the accuracy oscillates and never reach a reasonable threshold. Perhaps the model is too big and overfits to the training dataset. Perhaps the model size is alright, but a rugged loss landscape makes it impossible to find the global minimum using gradient descent methods. But it may also be the case that model is not expressible enough---there is simply no assignment of the model weights that would allow it to compute the underlying true hypothesis. Such a scenario is the point of interest of
	the expressivity of neural networks (and related models), which is an important area of machine learning theory.
	
	For instance, take a simple feed-forward neural network (FFN). It is well known that any Boolean function $f\colon\{0,1\}^n\rightarrow\{0,1\}$ can be computed by a sufficiently large FFN with an appropriate activation function. However, a naive construction requires the size of the network to grow exponentially with the length of the input. Clearly, this is not tight for all Boolean functions. Understanding which model properties (like size or specific activation function) are required to express a given function can guide model design and help explain failure in training.
	
	In this paper we study expressivity of transformers with respect to one specific task, namely, the PARITY task. Let us define PARITY as a function which assigns $0$ to all binary words with an even number of ones and $1$ to all the rest. PARITY in an exemplary task that has been studied both in theoretical computer science (e.g. in the context of circuit complexity ~\citep{furst1984parity,khrapchenko1971complexity}) and in machine learning theory (see e.g.~\cite{regev2009lattices}). One of the reasons why PARITY got considerable attention is because, in a sense, it is the most sensitive Boolean function---flipping a single bit always changes the output of the function. 
	
	In the context of transformers, PARITY was also one of the first formal task for which expressivity has been studied. \cite{hahn2020theoretical} considered a simplified model of transformer with unique hard attention (UHAT) and used random restriction method to prove that PARITY is not computable by a constant-depth UHAT transformer. Later \cite{hao2022formal} strenghten this result by showing that any family of functions computable by a constant-depth UHAT transformer is in the class $AC^0$---which is known to not contain PARITY. Although concerned with a simplified model, these results suggested some explanation to why transformers struggle to generalize on tasks such as PARITY \citet{bhattamishra2020ability}.
	
	Meanwhile, \cite{chiang2022overcoming} gave an explicit construction of a 2-layer transformer with soft attention capable of computing PARITY. This result showed a stark contrast between hard and soft attention mechanisms, albeit with a caveat. The construction of Chiang and Cholak required a 
	positional encoding $i \mapsto i/n$ that depends not only on the position $i$ but also on the input length $n$. This differs from the standard approaches because transformers are usually run on inputs of varying depth. In fact, in NLP transformers often employ causal masking, where the $i$-th position can only access information from the positions with smaller indices (and so, most positions do not 'see' the input length).

	
	This motivated \cite{kozachinskiy2025uniform} to give an alternative 3-layer construction based on a length-independent positional encoding. However, their  construction works just in the the full-attention model and crucially requires that each token accesses the information from all the other positions. Moreover, they gave no bound on how fast their positional encoding has to grow.
	
	\cite{yang2025transformer} gave yet another 2-layer construction that works with causal masking and no positional encoding (except a specialized beginning-of-sequence token). That being said, this construction required two features that differ from standard practice---hard attention and layer normalization with the regularization parameter $\varepsilon$ set to zero. These design choices cannot be used in actual learning via standard gradient methods.
	
	Interestingly, it remains open whether PARITY can be computed by a transformer in just one layer. All the constructions mentioned before use at least 2 layers. In general, proving a lower bound on the number of transformer layers needed to compute some task is hard, especially for the full-attention model~\citep{chen2024theoretical} but some techniques for single layer transformers exist. \cite{sanford2023representational} introduced a communication complexity technique (which works under the assumption of logarithmic precision). More recently, \cite{kozachinskiy2025strassen} gave a new technique, based on a notion of the Split-VC dimension, which works even for transformers with infinite precision. 
	Both techniques require finding a partition of the input bits such that either (a) the communication complexity of the resulting problem, where Alice gets bits from one part of the partition and Bob from the other, is large; or (b) the VC dimension of the concept class, obtained by considering bits from one part of the partition as inputs and from the other as parameters, is large. For PARITY, however, both quantities are constant for any partition, making existing technique not applicable to this function.

	
	\paragraph{Our results.} 
	We solve this open problem and show that no 1-layer transformer can solve PARITY. Our argument relies on the notion of average sensitivity of a function $f$, which is defined as the number of input positions such that flipping them changes the value of $f$, averaged over all input words of the given length. In particular, we show that a function computed by a transformer with a single layer, has the average sensitivity of at most $\sqrt{n} \cdot (\log n)^{O(1)}$. We contrast this with the observation that the average sensitivity of the PARITY grows linearly with the input length.
	
	Our second technical contribution is a new construction of 4-layer transformer that computes PARITY, which improves on the existing constructions in the following aspects: it uses soft attention, works for both full-attention and causal masking architecture, uses only length-independent positional encoding and no layer normalization. Furthermore, its positional encoding is polynomially bounded, which makes it possible to implement on input lengths of practical size.

	\paragraph{Organization of the paper.}
	Section \ref{sec:transformers} introduces all necesary definitions, related to transformers. In Section \ref{sec:lower_bound}, the sensitivity lower bound on 1-layer 1-head transformers is established. Finally, Section \ref{sec:construction} gives our new construction of a transformer for parity.

	\section{Background}
	\label{sec:transformers}
	
	\paragraph{Basic Notation.} 
	For a vector $x \in \mathbb{R}^d$ we write $x_i$ for its $i$-th
	coordinate, and $(\mathbb{R}^d)^*$ for the set of all finite sequences of vectors in
	$\mathbb{R}^d$. 
	We use PARITY to represent the family $\{\mathrm{PARITY}_n\}_{n \geq 1}$,
	where $\mathrm{PARITY}_n(x_1, \ldots, x_n) = x_1 \oplus \cdots \oplus x_n$, where $\oplus$ is the XOR operator, i.e., $a \oplus b = (a + b) \bmod 2$ for
	$a, b \in \{0, 1\}$.

	\paragraph{Attention Layers.}
	We will consider transformers with  two types of attention layers: \emph{full attention} and \emph{causally masked attention}.
	A $d$-dimensional, $H$-head \emph{attention layer} is a
	function $L \colon (\mathbb{R}^d)^* \to (\mathbb{R}^d)^*$ parameterised by query,
	key, and value matrices $Q^{(k)}, K^{(k)}, V^{(k)} \in \mathbb{R}^{d \times d}$,
	with $k = 1, \ldots, H$, a heads-mixing matrix  $W_O \in \mathbb{R}^{d \times dH}$,
	and feed-forward parameters $W_1, W_2 \in \mathbb{R}^{d \times d}$, $b_1, b_2 \in \mathbb{R}^d$.
	On input $(\alpha_1, \ldots, \alpha_n) \in (\mathbb{R}^d)^n$, the layer
	computes for each head $k =1 ,\dots, H$: the \emph{attention weights}  of the $j$th to $i$th position $L^{(k)}_{ij} \in \mathbb{R}$  and \emph{head values}  in $j$th position $h_j^{(k)} \in \mathbb{R}^d$ (for $i, j = 1, \ldots, n$) as follows:
	
	\medskip
	\noindent\begin{minipage}{0.38\textwidth}
		\begin{equation}
			\label{eq_logits}
			L^{(k)}_{ij} = \frac{\langle K^{(k)} \alpha_i, Q^{(k)} \alpha_j \rangle}{\sqrt{d}},
		\end{equation}
	\end{minipage}
	\hfill
	\begin{minipage}{0.45\textwidth}
		\begin{equation}
			\label{eq_headvalues}
			h_j^{(k)} = \frac{\sum_{i = 1}^{m } \exp\{L^{(k)}_{ij}\}\, V^{(k)} \alpha_i}
			{\sum_{i = 1}^{m } \exp\{L^{(k)}_{ij}\}},
		\end{equation}
	\end{minipage}
	\medskip
	
	\noindent where the value of $m$ determines the type of the
	layer: $m=n$ if layer is  \emph{full attention}, and
	$m = j$ if it is \emph{causally masked}. 
	The head values are
	combined and passed through a position-wise feed-forward network:
	
	\medskip
	\noindent\begin{minipage}{0.32\textwidth}
		\begin{equation}
			\label{eq_multihead}
			h_j = W_O \begin{pmatrix} h_j^{(1)} \\ \vdots \\ h_j^{(H)} \end{pmatrix},
		\end{equation}
	\end{minipage}
	\hfill
	\begin{minipage}{0.65\textwidth}
		\begin{equation}
			\label{eq_ffn}
			\beta_j = W_2 \cdot \mathrm{ReLU}\bigl( W_1(h_j + \alpha_j) + b_1 \bigr) + b_2,
		\end{equation}
	\end{minipage}
	\medskip
	
	\noindent  where $\mathrm{ReLU}(x_1, \ldots, x_d) = (\max\{0, x_1\}, \ldots, \max\{0, x_d\})$.
	The output sequence is $(\beta_1, \ldots, \beta_n) = L(\alpha_1, \ldots, \alpha_n)$.
	
	\paragraph{Transformers.}
	We consider transformers that map sequences of tokens into the next, most probable token. 
	A $C$-layer, $H$-head, $d$-dimensional \emph{transformer} over a finite
	vocabulary $\mathcal{V}$ (a set of \emph{tokens} which includes  symbol $\bot$) is a function
	$T \colon \mathcal{V}^* \to \mathcal{V}$ specified by 
	$H$-head $d$-dimensional attention layers
	$L_1, \ldots, L_C$ (all of the same type), an input embedding
	$\mathrm{E} \colon \mathcal{V} \times \mathbb{N}^2 \to \mathbb{R}^d$, and an
	output distribution matrix $W \in \mathbb{R}^{|\mathcal{V}| \times d}$. On input
	$x_1 \dots x_n \in \mathcal{V}^n$, it  applies position-wise the input embedding
	$\alpha_i = \mathrm{E}(x_i, i, n)$, then applies the attention layers computing
	$(\beta_1, \ldots, \beta_n) = L_C \circ \cdots \circ L_1(\alpha_1, \ldots, \alpha_n)$,
	and outputs
	\begin{equation*}
		\label{eq:output_token}
		T(x_1, \ldots, x_n) = \arg\max_{x \in \mathcal{V}}\,
		\bigl(\mathrm{softmax}(W \beta_n)\bigr)_x,
	\end{equation*}
	with the convention that $T(x_1, \ldots, x_n) = \bot$ if the argmax is not unique.

	The embedding $\mathrm{E}$ is of \emph{standard form} if
	$\mathrm{E}(x, i, n) = \mathrm{TE}(x) + \mathrm{PE}(i, n)$ for some \emph{token
		embedding} $\mathrm{TE} \colon \mathcal{V} \to \{0, 1\}^d$ and \emph{positional encoding}
	$\mathrm{PE} \colon \mathbb{N}^2 \to \mathbb{R}^d$; if additionally $\mathrm{PE}(i, n)$  depends only on $i$ (so $\mathrm{PE}(i, n) = g(i)$), we say that  $\mathrm{E}$ is \emph{length-independent}.

	We say that a transformer $T$ \emph{computes} a sequence $\{f_n\}_{n \in \mathbb{N}}$ of
	Boolean functions $f_n \colon \{0, 1\}^n \to \{0, 1\}$ if
	$\{0, 1\} \subseteq \mathcal{V}$ and $T(x) = f_n(x)$, for every $n \in \mathbb{N}$
	and every $x \in \{0, 1\}^n$.

	\paragraph{Sensitivity and Average Sensitivity}
	
	Sensitivity is defined for functions $f\colon\{0, 1\}^n \to \{0, 1\}$ mapping binary strings of  length $n$ to $0$ or $1$. 
	\emph{Sensitivity} of such a function $f$ at input $x\in\{0, 1\}^n$, denoted  $s_x(f)$, is the number of input positions $i\in\{1,\dots,n\}$ such that flipping the $i$th bit of $x$ changes the value of $f(x)$. The \emph{average sensitivity} of $f$ is defined as
	$$
	\label{eq_as}
	as(f) = \sum\limits_{x\in\{0, 1\}^n} \frac{s_x(f)}{2^n},
	$$
	that is, the expected number of bit positions at which $f$ is sensitive, 
	where the expectation is taken uniformly over all inputs $x \in \{0,1\}^n$.
	For example, PARITY on inputs of length $n$ 
	has the maximum possible average sensitivity: since flipping any 
	single bit always changes the parity, we have $s_x(\text{PARITY}) = n$ 
	for every $x \in \{0,1\}^n$, and so, $as(\text{PARITY}) = n$.
	
	In the paper we will also use a technical result establishing a bound on the average sensitivity of 
	polynomial threshold functions.
	A function $f \colon\{0, 1\}^n \to \{0, 1\}$ is called a degree-$d$ \emph{polynomial threshold function} (PTF) if there exists a degree-$d$ polynomial $P \in \mathbb{R}[x_1, \ldots, x_n]$ such that $P(x) > 0$ for all $x\in\{0, 1\}^n$ with $f(x) = 1$, and $P(x) < 0$ for all $x \in\{0, 1\}^n$ with $f(x) = 0$.
	For example, the majority function $\mathrm{maj}_n(x_1,\dots,x_n) = 1$ 
	iff $x_1 + \dots + x_n > \frac{n}{2}$ is a degree-$1$ PTF, witnessed by the 
	polynomial $P(x_1, \dots, x_n) = x_1 + \dots + x_n - \frac{n}{2}$.
	A result of  Kane, that will be important for our paper, establishes an upper bound on the   average sensitivity of $O(1)$-degree PTFs


	\begin{theorem}
		[\citep{kane2014correct}]
		\label{thm_kane}
		Let $f$ be a degree-$d$ PTF over $n > 1$ variables. Then 
		\[as(f) \le \sqrt{n}(\log n)^{O(d\log d)} 2^{O(d^2 \log d)}.\]
	\end{theorem}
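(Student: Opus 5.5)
Since this is a cited result, the plan below reconstructs the strategy of \citep{kane2014correct}; it will be used as a black box later in the paper. The overall route is to reduce average sensitivity to a noise-sensitivity-type quantity and then control that quantity with anticoncentration for low-degree polynomials. Write $f=\mathrm{sign}(P)$ with $\deg P = d$. Pass to $\pm 1$ variables and normalise $\mathrm{Var}[P]=1$. Then
\[
as(f)=\sum_{i=1}^n \Pr_x\bigl[\mathrm{sign}P(x)\neq \mathrm{sign}P(x^{\oplus i})\bigr],
\]
and flipping $x_i$ changes $P$ by $2x_i\partial_i P(x)$.

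The first step follows the standard degree-$1$ trick: randomise which $\Theta(\sqrt n)$-size group of coordinates is perturbed, and compare with a small-noise process. Concretely, I would split the coordinates randomly into $m$ blocks. Inside each block I would resample the coordinates by a random sign pattern, so that $as(f)$ is bounded by $O(\sqrt n)$ times the probability that $\mathrm{sign}P$ flips under an $\varepsilon$-correlated perturbation with $\varepsilon\approx 1/n$. The target is then a noise-sensitivity estimate
\[
\Pr\bigl[\mathrm{sign}P(x)\ne\mathrm{sign}P(y)\bigr]\le \sqrt{\varepsilon}\,(\log(1/\varepsilon))^{O(d\log d)}2^{O(d^2\log d)}
\]
for $\varepsilon$-correlated $(x,y)$, which yields the stated bound after the averaging.

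To prove the noise estimate, I would write $P(y)-P(x)$ as a polynomial of degree $\le d$ whose variance is $O(d\varepsilon)$. By hypercontractivity, $|P(y)-P(x)|\le \sqrt{\varepsilon}\,(\log 1/\varepsilon)^{O(d)}$ except with negligible probability. A sign change then requires $|P(x)|$ to be at most this size. This reduces the problem to a small-ball (anticoncentration) bound $\Pr[|P(x)|\le t]\lesssim t\cdot\mathrm{polylog}$ at scale $t\approx\sqrt\varepsilon$. For regular polynomials (all influences small) I would use the invariance principle to move to Gaussian space and apply Carbery--Wright. For non-regular ones, the regularity lemma restricts a bounded number of high-influence variables, leaving either a regular polynomial or a nearly constant-sign one.

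The main obstacle is the exponent. Carbery--Wright only gives $\Pr[|P|\le t]\le O(d\,t^{1/d})$, which leads to the Gotsman--Linial-type bound $n^{1-1/O(d)}$ rather than $\sqrt n$. Getting the correct $\sqrt{\varepsilon}$ requires Kane's key idea. One decomposes $P$, via a structure theorem, into a bounded number of polynomials $h_j$ in a \emph{diffuse} position, i.e.\ with non-degenerate joint derivatives. Then one shows that $(P,\partial P)$ are jointly anticoncentrated, so that conditioning on $|P|$ small does not also make the derivative large. This is what produces the $(\log n)^{O(d\log d)}$ and $2^{O(d^2\log d)}$ factors, and I would expect to spend essentially all of the effort on this decomposition and its Gaussian-to-Boolean transfer.
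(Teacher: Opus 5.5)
The paper does not prove this theorem. It is quoted from \citep{kane2014correct} and used purely as a black box in the derivation of Theorem~\ref{thm:1layer_lower}, where it is applied to the two PTFs given by $P\pm\varepsilon$. Your opening decision to treat it as a citation is therefore exactly what the paper does, and nothing downstream depends on how Kane's argument goes. The reconstruction you add, however, is not a proof, and one of its steps is wrong as written.

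\textbf{The scaling in your reduction is wrong.} Your reduction from average sensitivity to noise sensitivity has the wrong factor. At $\varepsilon\approx 1/n$ your target noise bound is about $n^{-1/2}(\log n)^{O(d\log d)}2^{O(d^2\log d)}$. Bounding $as(f)$ by $O(\sqrt n)$ times the flip probability would then give $as(f)\le(\log n)^{O(d\log d)}2^{O(d^2\log d)}$. That already fails for $d=1$: $\mathrm{maj}_n$ is a degree-1 PTF with $as(\mathrm{maj}_n)=\Omega(\sqrt n)$, as the paper's own Remark notes.

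The correct link is elementary and needs no random blocks. Let $y$ be obtained from a uniform $x$ by flipping each bit independently with probability $1/n$. Conditioning on exactly one bit being flipped gives $\Pr[f(x)\ne f(y)]\ge (1-1/n)^{n-1}as(f)/n\ge as(f)/(en)$, so $as(f)\le e\,n\Pr[f(x)\ne f(y)]$. A $\sqrt{\varepsilon}$-type noise bound at $\varepsilon=1/n$ then gives $\sqrt n$ times the stated factors. Random partitions into blocks are what one uses for the converse direction, turning average-sensitivity bounds into noise-sensitivity bounds.

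\textbf{The essential step is only named.} As you note yourself, the hypercontractivity plus small-ball route only reaches $n^{1-1/O(d)}$. The loss comes from the global bound $|P(y)-P(x)|\le\sqrt{\varepsilon}\,\mathrm{polylog}$, which has to be localised by comparing $|P(x)|$ with the size of the derivatives of $P$ at $x$. The whole content of the $\sqrt n$ exponent lies in the resulting joint anticoncentration and its transfer from Gaussian to Boolean inputs. Your proposal names this step without carrying it out, so as a standalone argument it is incomplete. That is acceptable here only because the paper, too, imports the theorem from Kane without proof.
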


	\paragraph{Logic over Reals and Quantifier Elimination}
	
	A first-order logic over the reals, $(\mathbb{R}, <, +, \cdot)$,  is a formalisms allowing 
	to write formulae that are built from real-valued variables and constants, using
	addition ($+$), multiplication ($\cdot$), and comparison ($<$), as well 
	as standard logical connectives ($\lor$, $\land$, $\lnot$, $\to$) and 
	quantifiers ($\forall$, $\exists$). For example, we can write the formula
	$$
	\exists x \; (a \cdot x \cdot x + b \cdot x + c < 0)
	$$
	stating that the quadratic polynomial $a\cdot x^2 + b \cdot x +c$, with coefficients $a, b, c$, has  a 
	negative value for some $x \in \mathbb{R}$. 
	The  seminal Tarski-Seidenberg theorem~\citep{tarski1952decision,seidenberg1954new} implies that $(\mathbb{R}, <, +, \cdot)$ admits quantifier elimination, that is, 
	every formula can be rewritten into an equivalent formula which has no quantifiers. 
	For example, the formula above is equivalent to the quantifier-free formula 
	$$
	b \cdot b - 4 \cdot a \cdot c > 0.
	$$ 
	In this paper, we will use quantifier elimination in $(\mathbb{R}, <, +, \cdot)$ to show that certain properties of 
	transformer computations can be expressed as  polynomial 
	conditions.

	
	\section{One-layer Transformer Cannot Compute PARITY}
	\label{sec:lower_bound}
	
	We start this section with the proof of  the following structural result about Boolean functions, computable by 1-layer transformers.

	\begin{lemma} 
		\label{lemma_ford}
		Assume there is a 1-layer transformer $T$, computing  a sequence of Boolean functions $\{f_n\}_{n = 1}^\infty$. Then
		there exists a first-order formula $\Phi(z_1, \ldots, z_r)$ in the interpretation $(\mathbb{R}, <,+,  \cdot)$  such that for all $n$ there exist $r$  polynomials $l_1, \ldots, l_r \in\mathbb{R}[x_1,\ldots, x_{n}]$ of degree at most 1
		such that for any $x\in\{0, 1\}^{n}$, we have $f_n(x) = 1$ if and only if $\Phi(l_1(x), \ldots, l_r(x)) = 1$.
	\end{lemma}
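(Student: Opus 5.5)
The plan is to write out the computation of a $1$-layer transformer at the last position $n$ explicitly, and to observe that every quantity depending on all $n$ input bits enters only through finitely many sums. Because $x_i\in\{0,1\}$, each of these sums is an affine function of $x$. Everything that happens afterwards (division, the feed-forward network with ReLU, the argmax) involves a fixed number of real quantities. It uses only field operations and comparisons, so it is expressible by a fixed first-order formula $\Phi$ over $(\mathbb{R},<,+,\cdot)$. Note that for position $j=n$ full and causally masked attention coincide ($m=n$), so one argument covers both layer types. Also, since softmax is monotone, $T(x)=1$ iff the coordinate of $W\beta_n$ indexed by token $1$ is strictly larger than all other coordinates.

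The key step is linearisation. We have $\alpha_i = \mathrm{E}(x_i,i,n)$, and since $x_i\in\{0,1\}$ we can write $\alpha_i = u_i + x_i v_i$, with $u_i=\mathrm{E}(0,i,n)$ and $v_i=\mathrm{E}(1,i,n)-\mathrm{E}(0,i,n)$. The query vector $\alpha_n$ depends on the bit $x_n$, so I first fix $x_n=b\in\{0,1\}$ and set $q_b = \mathrm{E}(b,n,n)$. Then, for each head $k$, the logit is $L^{(k)}_{in} = c^{(k)}_{i,b} + d^{(k)}_{i,b}x_i$ for real constants $c,d$. The crucial identity is
\[
\exp\{c+dx_i\} = e^{c}\bigl(1+(e^{d}-1)x_i\bigr) \quad\text{for } x_i\in\{0,1\},
\]
which is affine in $x_i$. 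Multiplying by $V^{(k)}\alpha_i = V^{(k)}u_i + x_iV^{(k)}v_i$ gives a product of two affine functions of $x_i$. Using $x_i^2=x_i$, this is again affine in $x_i$. Summing over $i\le n$ yields, for each $b\in\{0,1\}$ and each head $k$, a denominator $D^{(k)}_b(x)$ and a numerator vector $N^{(k)}_b(x)\in\mathbb{R}^d$. Each coordinate of these is a polynomial of degree at most $1$ in $x_1,\dots,x_n$. When $x_n=b$ they agree on $\{0,1\}^n$ with the true softmax denominator and numerator, so $h^{(k)}_n = N^{(k)}_b/D^{(k)}_b$.

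Next I assemble the linear polynomials $l_1,\dots,l_r$. These are the polynomial $x_n$, the polynomials $D^{(k)}_b$ and the coordinates of $N^{(k)}_b$ for $b\in\{0,1\}$ and $k\le H$, and the coordinates of $q_0$ and $q_1$ as constant (degree-$0$) polynomials. This gives $r = 1 + 2H(d+1) + 2d$, independent of $n$. The one subtlety, and the step I expect to need the most care, is uniformity of $\Phi$ in $n$. The positional encoding $\mathrm{PE}(n,n)$ changes with $n$, so it cannot be hard-coded into $\Phi$. Passing it in through the constant polynomials solves this. All genuinely fixed parameters ($W_O, W_1, W_2, b_1, b_2, W$) are the same for every $n$ and can appear as constants in $\Phi$. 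Then $\Phi(z_0,\ldots)$ is the formula $(z_0=0\wedge\Psi(\text{block}_0))\vee(z_0=1\wedge\Psi(\text{block}_1))$.

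Here $\Psi(D^{(1)},N^{(1)},\dots,D^{(H)},N^{(H)},q)$ existentially quantifies reals $h^{(k)}$ with $D^{(k)}h^{(k)} = N^{(k)}$ (the denominators are positive). It sets $h=W_O(h^{(1)},\dots,h^{(H)})$. It introduces $y$ with $y_t=\max\{0,(W_1(h+q)+b_1)_t\}$, written as a disjunction of linear cases. It sets $\beta=W_2y+b_2$. Finally it asserts $(W\beta)_{1} > (W\beta)_{x}$ for every token $x\neq 1$, which encodes both the argmax and its uniqueness. All of these are polynomial equalities and inequalities with fixed coefficients. Hence $\Psi$, and with it $\Phi$, is a first-order formula that does not depend on $n$. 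By construction $f_n(x)=1$ iff $\Phi(l_1(x),\dots,l_r(x))$ holds for every $x\in\{0,1\}^n$, which is the claim of Lemma~\ref{lemma_ford}.
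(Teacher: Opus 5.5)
Your proposal is correct and follows essentially the same route as the paper. You fix the last bit $x_n=b$, note that each softmax term depends on a single bit and is therefore affine in it, pass the coordinates of $\mathrm{E}(b,n,n)$ in as constant polynomials, and existentially quantify the head values (via $D^{(k)}h^{(k)}=N^{(k)}$), the ReLU outputs and $\beta_n$, with the unique argmax encoded by strict inequalities; your separate free-variable blocks for $b=0,1$ and your remark that the denominators are positive make explicit two points the paper leaves implicit.
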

	
	Then we derive the following upper bound on the average sensitivity of such functions.
	
	\begin{theorem}
		\label{thm:1layer_lower} Assume that a sequence of Boolean functions $\{f_n\}_{n = 1}^\infty$ is computable by a 1-layer transformer. Then $as(f_n) = O(\sqrt{n} \cdot (\log n)^{O(1)})$ as $n\to\infty$.
	\end{theorem}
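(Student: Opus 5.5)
We will deduce Theorem~\ref{thm:1layer_lower} from Lemma~\ref{lemma_ford} combined with Kane's bound (Theorem~\ref{thm_kane}). Fix the formula $\Phi(z_1,\ldots,z_r)$ given by Lemma~\ref{lemma_ford}; crucially, $\Phi$ and $r$ do not depend on $n$. By Tarski--Seidenberg quantifier elimination, we replace $\Phi$ by an equivalent quantifier-free formula. After pushing negations inward and writing it in disjunctive normal form, it becomes a finite disjunction of conjunctions of atoms $p(z)>0$, $p(z)=0$, $p(z)<0$, where $p\in\mathbb{R}[z_1,\ldots,z_r]$. Let $p_1,\ldots,p_s$ be all polynomials occurring in these atoms, and let $D$ be the maximum of their degrees. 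Both $s$ and $D$ are constants independent of $n$.

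Now fix $n$ and the affine forms $l_1,\ldots,l_r$ from the lemma. Set $q_j(x)=p_j(l_1(x),\ldots,l_r(x))$. Each $q_j$ is a polynomial in $x$ of degree at most $D$, since the $l_i$ are affine. Then $f_n(x)$ is a fixed Boolean combination of the predicates $[q_j(x)>0]$, $[q_j(x)=0]$, $[q_j(x)<0]$. We express each such predicate through PTFs, using the finiteness of the cube $\{0,1\}^n$. Because only finitely many values $q_j(x)$ occur, there is $\varepsilon>0$ such that $q_j(x)\neq 0$ implies $|q_j(x)|>\varepsilon$ for all $j$. Then:
\begin{itemize}
\item $[q_j>0]$ is the degree-$D$ PTF given by $q_j-\varepsilon/2$;
\item $[q_j<0]$ is the degree-$D$ PTF given by $-q_j-\varepsilon/2$;
\item $[q_j=0]$ is the degree-$2D$ PTF given by $\varepsilon^2/4-q_j^2$.
\end{itemize}
Thus $f_n=G(g_1,\ldots,g_{3s})$ for a fixed Boolean function $G$ of $3s$ arguments, where each $g_k$ is a PTF of degree at most $2D$.

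The final step is a union-bound argument. If flipping bit $i$ at $x$ changes $f_n$, then it must change at least one $g_k$, because $G$ is fixed. Hence $s_x(f_n)\le\sum_k s_x(g_k)$, and averaging over $x$ gives $as(f_n)\le\sum_{k=1}^{3s} as(g_k)$. Applying Theorem~\ref{thm_kane} with $d=2D$ gives
\[
as(f_n)\le 3s\,\sqrt{n}\,(\log n)^{O(D\log D)}\,2^{O(D^2\log D)}=O\bigl(\sqrt n\,(\log n)^{O(1)}\bigr),
\]
because $s$ and $D$ are constants.

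The main point requiring care is uniformity: the constants $s$ and $D$ must come from the single $n$-independent formula $\Phi$. This is exactly why Lemma~\ref{lemma_ford} puts all the $n$-dependence into the coefficients of the affine $l_i$, and why quantifier elimination is applied once to $\Phi$ rather than separately for each $n$. The handling of equality atoms and of the choice of $\varepsilon$ is routine, since $\varepsilon$ may depend on $n$ without affecting Kane's bound.
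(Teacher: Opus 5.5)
Your proof is correct and follows the paper's route: quantifier elimination applied once to the $n$-independent $\Phi$, substitution of the affine forms to get a constant number of bounded-degree polynomials, an $\varepsilon$ chosen from the finite cube to turn sign conditions into PTFs, Kane's bound, and a union bound over the constantly many resulting PTFs. The only cosmetic difference is in the equality atoms: you encode $[q_j=0]$ by the degree-$2D$ PTF $\varepsilon^2/4-q_j^2$, whereas the paper reads off the extended sign of $P$ from the two degree-$D$ PTFs $P+\varepsilon$ and $P-\varepsilon$; either works.
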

	
	As PARITY on inputs of length $n$ has average sensitivity exactly $n$, the last result immediately yealds the following corollary.
	
	\begin{corollary}
		No 1-layer transformer computes PARITY.
	\end{corollary}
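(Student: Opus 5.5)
The statement to prove is the Corollary, and it follows directly from Theorem~\ref{thm:1layer_lower}; the real work is in that theorem, whose proof I would organise as follows. For the corollary itself, suppose a 1-layer transformer computes PARITY. Then Theorem~\ref{thm:1layer_lower} gives $as(\mathrm{PARITY}_n) = O(\sqrt{n}(\log n)^{O(1)})$. On the other hand, $as(\mathrm{PARITY}_n) = n$, as computed in Section~\ref{sec:transformers}. For large $n$ the first bound is $o(n)$, a contradiction.

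For Theorem~\ref{thm:1layer_lower} I would start from Lemma~\ref{lemma_ford}. It gives a fixed formula $\Phi(z_1,\dots,z_r)$, with $r$ and $\Phi$ independent of $n$, such that $f_n(x)=\Phi(l_1(x),\dots,l_r(x))$ for affine forms $l_i$.

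By Tarski--Seidenberg, $\Phi$ is equivalent to a quantifier-free formula. That formula is a Boolean combination of sign conditions $p_j(z)>0$ or $p_j(z)=0$, for $j=1,\dots,s$. Here each $p_j$ is a polynomial of degree at most $D$, and $s$ and $D$ depend only on $\Phi$, hence are constants.

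Substituting $z_i=l_i(x)$ turns each $q_j(x):=p_j(l_1(x),\dots,l_r(x))$ into a polynomial in $x$ of degree at most $D$. So $f_n$ is a fixed Boolean function $F$ of the $2s$ Boolean indicators $[q_j(x)>0]$ and $[q_j(x)=0]$.

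Each indicator on $\{0,1\}^n$ is a degree-$O(D)$ PTF, provided we fix strictness. Since the cube is finite, $[q_j>0]$ equals $[q_j-\varepsilon>0]$ for small $\varepsilon$, with no zeros. Likewise $[q_j=0]$ equals $[\varepsilon^2-q_j^2>0]$, which has degree $2D$.

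Next I would use subadditivity of sensitivity. If flipping bit $i$ changes $f_n(x)=F(g_1(x),\dots,g_{2s}(x))$, then it must change some $g_k(x)$. Hence $s_x(f_n)\le\sum_k s_x(g_k)$, and so $as(f_n)\le\sum_k as(g_k)$.

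Theorem~\ref{thm_kane} bounds each $as(g_k)$ by $\sqrt{n}(\log n)^{O(D\log D)}2^{O(D^2\log D)}$. Since $D$ and $s$ are constants, summing gives $as(f_n)=O(\sqrt n(\log n)^{O(1)})$.

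The main obstacle is Lemma~\ref{lemma_ford} itself. For that I would show that the output $\beta_n$ depends on $x$ only through a bounded number of linear statistics.

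With a length-independent embedding $\alpha_i=\mathrm{TE}(x_i)+g(i)$, the token embedding is affine in the bit: $\mathrm{TE}(x_i)=\mathrm{TE}(0)+x_i(\mathrm{TE}(1)-\mathrm{TE}(0))$. So the logit $L_{in}$ is affine in $x_i$, of the form $a_i+b_ix_i$. Then $\exp(L_{in})=e^{a_i}(1+(e^{b_i}-1)x_i)$ is also affine in $x_i$, because $x_i\in\{0,1\}$.

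It follows that the numerator and denominator of each head value at position $n$ are affine forms in $x$ with real coefficients, given coordinatewise. These give $r=O(Hd)$ forms, and only the last position matters.

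The rest of the computation (the division, $W_O$, the residual term $\alpha_n$, the ReLU network, and the $\arg\max$ with its uniqueness condition) is first-order definable over $(\mathbb{R},<,+,\cdot)$, with constants fixed by the weights. The residual term $\alpha_n$ depends on $x_n$, which is itself an affine form. The positional term $g(n)$ would seem to depend on $n$, but I would absorb it: $g(n)$ enters only through the same linear combinations, so it can be folded into the constant terms of the $l_i$. Concretely, I would take the components of $g(n)$ themselves as extra constant "affine forms" $l_i$. This keeps $\Phi$ uniform in $n$.

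Handling this uniformity is the delicate point. I expect a general $\mathrm{E}(x,i,n)$ to be treatable the same way, since it is still affine in $x_i$ for fixed $i$ and $n$.
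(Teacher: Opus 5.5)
Your argument for the corollary is the paper's: combine Theorem~\ref{thm:1layer_lower} with $as(\mathrm{PARITY}_n)=n$. Your sketch of Theorem~\ref{thm:1layer_lower} also follows the paper's route (quantifier elimination, a bounded number of bounded-degree sign conditions, then Kane's bound), and your encoding $[q_j=0]=[\varepsilon^2-q_j^2>0]$ is an equally valid substitute for the paper's $P\pm\varepsilon$ trick.

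The one inaccuracy is in your sketch of Lemma~\ref{lemma_ford}. The query $Q\alpha_n$ depends on $x_n$, so $L_{in}$ is affine in $x_i$ with $x$-independent coefficients only once $x_n$ is fixed; in general the $i$-th term is multilinear in $(x_i,x_n)$ and contains $x_ix_n$. This is exactly why the paper fixes $x_n$ and glues $\Phi_0,\Phi_1$ together via a fresh variable. Alternatively, you could accept degree-$2$ forms, which is harmless for the theorem since only bounded degree is used.
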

	
	\begin{remark}
		Note that for 1-layer transformers, there is no difference between the full attention and causally-masked attention models. In the last position, where the output token is computed, we attend all positions in both models. And with just a single layer, the computation of attention in other positions is not affecting the result yet. That is why results of this section are formulated without specifying the type of attention -- full or causal.
	\end{remark}
	
	\begin{remark}
		The upper bound on sensitivity in Theorem  \ref{thm:1layer_lower} is essentially tight as there exists a sequence of Boolean functions, having average sensitivity $\Omega(\sqrt{n})$ and computable by a 1-layer 1-head transformer. For instance, this holds for the sequence of \emph{majority functions}, $\{\mathrm{maj}_n\}_{n\in\mathbb{N}}$, where
		\[\mathrm{maj}_n(x_1, \ldots, x_n) = \begin{cases}1 & x_1 + \ldots + x_n > n/2,\\ 0 & \text{otherwise}.\end{cases}\]
		Almost all inputs to $\mathrm{maj}_n$ have 0 sensitivity, except of  $\Omega(2^n/\sqrt{n})$ inputs from 2 adjacent layers  of the Boolean cube (where $\mathrm{maj}_n$ changes its value) that all have sensitivity $\Omega(n)$. This implies that $as(\mathrm{maj}_n) = \Omega(\sqrt{n})$. On the other hand, a 1-layer 1-head transformer is able to compute this function by computing the expression:
		$\frac{x_1 + \ldots + x_n}{n} - \frac{1}{2n} - \frac{1}{2}$,
		where the first term comes from the average of the input bits with uniform attention weights, and the second term comes from the positional encoding. This quantity is positive for inputs with value of $\mathrm{maj}_n$ equal to 1, and negative for inputs with value of  $\mathrm{maj}_n$ equal to 0. It remains to put this quantity in the output distribution to the token 1, and minus this quantity to the token 0.
	\end{remark}
	
	\subsection{Proof of Lemma \ref{lemma_ford}}
	
	We establish the result under the assumption that the last bit to all functions $f_n$, that is, $x_n$, is always fixed to $0$. This way we get a formula $\Phi_0$, working under this assumption, and also, bu the same argument, one can get a formula $\Phi_1$, working under the assumption that $x_n = 1$. The final formula $\Phi$ is obtained  by 
	\[\Phi = ((z_0 = 0) \to \Phi_0)\land ((z_0 = 1) \to \Phi_1)\]
	where $z_0$ is a fresh free variable, which for inputs of length $n$ gets substituted by the last input bit.
	
	Once again, we assume that $x_n$, the last input bit when the input has length $n$, is fixed to $0$. Thus, the value of $f_n$ becomes the function of the first $n - 1$ input bits, $x\in\{0, 1\}^{n-1}$.

	Assume that the dimension of $T$ is $d$ and the number of attention heads of $T$ is $H$. 
	Our transformer $T$ is 1-layer. Hence, only the attention in the $n$-th position is relevant for computing the output. We first observe that for every $k = 1, \ldots, H$, the vector $h_n^{(k)}\in\mathbb{R}^d$ -- the value of the $k$-th attention head in the $n$-th position -- as a function of $x\in\{0, 1\}^{n-1}$, can be represented as
	\begin{equation}
		\label{eq_linearfrac}
		h_n^{(k)} = \begin{pmatrix}
			\ell_1^{(k)}/\ell_0^{(k)}\ \\ \vdots \\\ell_d^{(k)}/\ell_0^{(k)}
		\end{pmatrix},
	\end{equation}
	where $\ell_0^{(k)}, \ldots,\ell_d^{(k)}\in\mathbb{R}[x_1, \ldots, x_{n-1}]$ are of degree 1. Indeed, by definition:
	\begin{equation}
		\label{eq_hn}
		h_n^{(k)} = \frac{\sum\limits_{i = 1}^n \exp\{L_{in}^{(k)}\} V^{(k)} \alpha_i}{\sum\limits_{i = 1}^n \exp\{L_{in}^{(k)}\}}\in\mathbb{R}^d,
	\end{equation}
	where $V^{(k)}\in\mathbb{R}^{d\times d}$ is a fixed matrix, $\alpha_1 = E(x_1, 1, n)\in\mathbb{R}^d, \ldots, \alpha_{n - 1} = E(x_{n - 1}, n -1, n)\in\mathbb{R}^d, n - 1, n), \alpha_n = E(0, n, n) \in\mathbb{R}^d$ are embeddings of input bits (recall again that $x_n$ is fixed to $0$) and
	\[        \label{eq_logits}
	L^{(k)}_{in} = \langle K^{(k)} \alpha_i, Q^{(k)}\alpha_n\rangle/\sqrt{d}
	\]
	for some fixed matrices $K^{(k)}, Q^{(k)}\in\mathbb{R}^{d\times d}$.
	
	The term in the numerator of \eqref{eq_hn}, corresponding to $i = n$, is a vector $\gamma_{nk} =  \exp\{L_{nn}^{(k)}\}  V^{(k)}\alpha_n\in\mathbb{R}^d$, not depending on $x\in\{0, 1\}^{n - 1}$.  Likewise, the term in the denominator of \eqref{eq_hn} for $i = n$ is a  number $\rho_{nk}\in\mathbb{R}$, not depending on $x\in\{0, 1\}^{n-1}$. In turn, for every $i = 1, \ldots, n - 1$, the $i$-th term in the numerator of \eqref{eq_hn} is a vector:
	\begin{align*}
		&\exp\{L_{in}^{(k)}\}  V^{(k)}\alpha_i = \exp\bigg\{\langle K^{(k)}\alpha_i, Q^{(k)}\alpha_n\rangle/\sqrt{d}\bigg\}  V^{(k)}\alpha_i \\
		&=  \exp\bigg\{\langle K^{(k)}\cdot\mathrm{E}(x_i, i, n), Q^{(k)}\cdot \mathrm{E}(0, n, n)\rangle/\sqrt{d}\bigg\} V^{(k)}\mathrm{E}(x_i, i, n)\in\mathbb{R}^d,
	\end{align*}
	whose value is determined by $i,n,k$, and $x_i$.  
	Hence, it can be written as $(1 - x_i)\gamma^0_{ink} + x_i \gamma^1_{ink}$ for some $\gamma^0_{ink}, \gamma^1_{ink}\in \mathbb{R}^d$. Similarly, the $i$-th term  in the denominator of \eqref{eq_hn}, for $i = 1, \ldots, n - 1$, can be written as $(1 - x_i) \rho^0_{ink} + x_i  \rho^1_{ink}$ for some  $\rho^0_{ink}, \rho^1_{ink}\in \mathbb{R}$. Overall, we obtain:
	\[h_n^{(k)} = \frac{\gamma_{nk} +\sum\limits_{i = 1}^{n - 1}((1 - x_i)\gamma^0_{ink} + x_i \gamma^1_{ink})}{\rho_{nk} + \sum\limits_{i = 1}^{n - 1}((1 - x_i)\rho^0_{ink} + x_i \rho^1_{ink})} = \begin{pmatrix}
		\ell_1^{(k)}/\ell_0^{(k)}\ \\ \vdots \\\ell_d^{(k)}/\ell_0^{(k)}
	\end{pmatrix},\]
	where $\ell_0^{(k)}, \ldots,\ell_d^{(k)}\in\mathbb{R}[x_1, \ldots, x_{n-1}]$ are of degree 1. With this, \eqref{eq_linearfrac} is established.
	
	\medskip
	
	The formula $\Phi_0$ will have $r = (d + 1) H + d$ free variables denoted by $z_{i}^{(k)}$, $i = 0, \ldots, d, k = 1, \ldots, H$ and $a_1, \ldots, a_d$.
	We want the formula $\Phi_0$ to decide, whether the output of the transformer $T$ is equal to the token $1$ on $x\in\{0, 1\}^{n-1}$ when one substitutes
	\[z_i^{(k)}\gets \ell_{i}^{(k)}, \qquad i = 0,\ldots, d, k = 1, \ldots, H,\]
	\[a_i \gets (\alpha_n)_i, \qquad i = 1, \ldots, d.\]
	where  $\ell_{i}^{(k)}\in\mathbb{R}[x_1, \ldots,x_{n-1}]$ are of degree 1 and satisfy \eqref{eq_linearfrac}, and $(\alpha_n)_1, \ldots, (\alpha_n)_d$ are coordinates of the vector $\alpha_n = E(0, n,n)$ (in place of variables $a_1, \ldots, a_d$ we thus substitute just constants, that is, $0$-degree polynomials).
	
	Let $\beta_n\in\mathbb{R}^d$ be the output of the attention layer in the $n$-th position. The transformer, in order to produce its output token, multiplies $\beta_n$ by the output-distribution matrix $W\in\mathbb{R}^{\mathcal{V}\times d}$ and applies the softmax operation to the resulting vector, obtaining a distribution over the vocabulary $\mathcal{V}$:
	\[\mu = \mathrm{softmax}(W \beta_n).\]
	The token with the maximal probability in this distribution is set as the output one. According to our conventions, the maximum has to be unique, and if it is not, $\bot$ is set as the output token. Hence, in order for 1 to be the output token, we must have:
	\[(W\beta_n)_1 > (W\beta_n)_\sigma, \qquad \sigma \in \mathcal{V}\setminus\{1\}. \]
	The latter, when expanded, turns into
	\[W_1^1\beta_n^1 + \ldots + W_d^1\beta^d_n > W_1^\sigma\beta_n^1 + \ldots + W_d^\sigma\beta^d_n, \qquad \sigma \in \mathcal{V}\setminus\{1\}. \]
	Correspondingly, we introduce into our formula $\Phi$ variables $B^1, \ldots, B^d$ that are supposed to correspond to $d$ coordinates of $\beta_n$. We start by writing:
	\begin{equation}
		\label{eq_bb}
		\exists B^1\ldots \exists B^d\,\, \bigwedge_{\sigma\in\mathcal{V}\setminus \{1\}} \bigg(W_1^1B^1 + \ldots + W_d^1B^d > W_1^\sigma B^1 + \ldots + W_d^\sigma B^d\bigg).
	\end{equation}
	This formula is just a conjunction of linear inequalities with some fixed coefficients. These are expressible in $(\mathbb{R}, <, + , \cdot)$ as we are assuming that we can use all real constants in the formulas.

	The vector $\beta_n$, according to \eqref{eq_ffn}, is computed by:
	\[\beta_n = 
	W_2 \cdot  \mathrm{ReLU}\left(W_1(h_n + \alpha_n) + b_1\right) + b_2\in\mathbb{R}^d.\]
	where $W_1, W_2, b_1, b_2$ consist of fixed constants. We already have variables $a_1, \ldots, a_d$, corresponding to coordinates of $\alpha_n$. We introduce variables $\chi_1, \ldots, \chi_d$, corresponding to the coordinates of $h_n$. Our goal is to make sure that the following condition:
	\begin{equation}
		\label{eq_reluhell}
		\exists \chi_1 \ldots \exists \chi_d \,\, \begin{pmatrix}
			B^1 \\ \vdots \\ B^d
		\end{pmatrix} = W_2 \cdot \mathrm{ReLU}\bigg(W_1 \cdot  \begin{pmatrix}
			\chi_1 + a_1 \\ \vdots \\ \chi_d + a_d
		\end{pmatrix} + b_1 \bigg) + b_2    
	\end{equation}
	is expressible in $(\mathbb{R}, <, +, \cdot)$. Indeed, we can introduce a variable for each of the results of intermediate computations in \eqref{eq_reluhell} (with an existential quantifier). Each of these variables is computed  by some linear combinations of other variables, or by $\mathrm{ReLU}$, applied to some other variable. Both are expressible in $(\mathbb{R}, <, +, \cdot)$; the equality $y = \mathrm{ReLU}(x)$ is expressible, for example, via:
	\[(x < 0 \to y = 0) \land (x \ge 0 \to y = x).\] 
	Finally, the vector $h_n$ is computed via 
	\[ h_n = W_O \begin{pmatrix}
		h_n^{(1)}  \\  h_n^{(2)} \\ \vdots \\ h_n^{(H)}
	\end{pmatrix},\]
	and the coordinates of vectors $h_n^{(k)}$ are equal to fractions of degree-1 polynomials $\ell_{i}^{(k)}$ as in \eqref{eq_linearfrac} (and these polynomials are getting substituting in place of variables $z_i^{(k)}$). To finish the construction of $\Phi$, for  $i = 1, \ldots, d$, $k = 1, \ldots, H$, we introduce a variable $\chi_{i}^{(k)}$, corresponding to the $i$-th coordinate of the vector $h_n^{(k)}$. By \eqref{eq_linearfrac}, we have $(h_n)^{(k)}_i = \ell^{(k)}_i/ \ell^{(k)}_0$. Correspondingly, in our formula must include conditions of the form $\chi_i^{(k)} = z^{(k)}_i/z^{(k)}_0$. Writing these conditions using multiplication, we get:
	\begin{equation}
		\label{eq_finalphi}
		\exists \chi_1^{(1)} \ldots \exists \chi_{d}^{(H)} \,\,\begin{pmatrix}
			\chi_1 \\ \vdots \\\chi_d
		\end{pmatrix} = W_O  \begin{pmatrix}
			\chi_1^{(1)} \\ \vdots \\\chi_d^{(1)} \\ \vdots  \\\chi_1^{(H)} \\ \vdots \\\chi_d^{(H)}
		\end{pmatrix} \land\bigg( \bigwedge_{i = 1}^d  \bigwedge_{k = 1}^H z^{(k)}_i = \chi_i^{(k)}\cdot z^{(k)}_0 \bigg).
	\end{equation}
	The formula $\Phi_0$ is obtained as the conjunction of (\ref{eq_bb}--\ref{eq_finalphi}).

	\subsection{Deriving Theorem \ref{thm:1layer_lower} from Lemma \ref{lemma_ford}}
	Take the formula $\Phi(z_1, \ldots, z_r)$, satisfying the conditions of Lemma \ref{lemma_ford}. Due to the Tarski-Seidenberg theorem, one can eliminate quantifiers in it. Thus, without loss of generality, one can assume that $\Phi(z_1, \ldots, z_t)$ is quantifier-free. In other words, this is just a Boolean combination of a fixed number of fixed polynomial  equalities and inequalities  in $z_1, \ldots, z_r$. They turn into a fixed number of $O(1)$-degree polynomial equalities and inequalities in $x_1, \ldots, x_{n}$ when we substitute  $\ell_1, \ldots, \ell_r$ in place of $z_1, \ldots, z_r$. After this substitution, the value of the formula coincides with $f_{n}(x)$. 
	
	In other words, there are $k = O(1)$ polynomials $P_1, \ldots, P_k$ of degree $O(1)$ such that the value of $f_{n}(x)$ for $x\in\{0, 1\}^{n}$ can be computed by asking the  ``extended signs'' of $P_1, \ldots, P_k$ on $x$ (where by the extended sign of a number we mean $0$ if the number is $0$, and $\pm 1$ depending if the number is positive/negative, respectively. We require to be able to say whether a number is exactly 0 in order to handle equalities in the formula $\Phi$).
	
	We now argue that $f_{n}$ must have low average sensitivity, using  Theorem \ref{thm_kane}.

	Average sensitivity is equal to the expected number of positions in a random input such that the value of the function changes after flipping the bit in this position. The value of $f_{n}$ may only change if the extended sign of one of the polynomials $P_1, \ldots, P_k$ changes. It remains to show that a single $O(1)$-degree polynomial $P$ changes its extended sign changes for at most $\sqrt{n} \cdot (\log n)^{O(1)}$ positions on average for a random input.  This follows from Theorem \ref{thm_kane} and the following observation. Take $\varepsilon > 0$ which is smaller in the absolute value than any non-zero value of $P$ on an input from the Boolean cube. Note that $P + \varepsilon, P - \varepsilon$ never take value $0$ on the Boolean cube and thus compute $O(1)$-degree PTFs. By Theorem \ref{thm_kane}, each of $P + \varepsilon, P - \varepsilon$ changes its value from positive to negative or vice versa for at most $\sqrt{n} \cdot (\log n)^{O(1)}$ positions on average. It remains to notice that signs of $P(x) + \varepsilon, P(x) - \varepsilon$ determine the extended sign of $P(x)$. Namely, $P(x) > 0$ only if $P(x) + \varepsilon, P(x) - \varepsilon$ are both positive, $P(x) = 0$ if one of these numbers is positive and the other is negative, and $P(x) < 0$ if both numbers are negative.

	\section{A New Transformer Construction for PARITY}
	\label{sec:construction}
	We require the following fact (a generalization of Faulhaber's formulas to real powers), proved in Appendix  for completeness.
	
	\begin{lemma}
		\label{lem_crazy3}
		For $\alpha \in [5, 100]$, and $n\in\mathbb{N}$, we have
		$1^\alpha + \ldots + n^{\alpha} = \frac{n^{\alpha + 1}}{\alpha + 1} + \frac{n^{\alpha}}{2} + \frac{\alpha n^{\alpha - 1}}{12} +  O(n^{\alpha -2})$.
	\end{lemma}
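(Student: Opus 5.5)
We prove the expansion with Euler--Maclaurin summation carried to the second Bernoulli correction, applied to $g(x)=x^\alpha$ on $[1,n]$. The key point is that the $O(\cdot)$ constant must be uniform over $\alpha\in[5,100]$. This is automatic here: every constant that appears is a continuous function of $\alpha$ on a compact interval. Since $\alpha\ge 5$, $g$ is at least $C^4$ on $[1,\infty)$ with all derivatives nonnegative there, so no singularity at the origin intervenes.

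Concretely, we use the form of Euler--Maclaurin with remainder involving the periodic Bernoulli function $\tilde B_2$:
\[
\sum_{k=1}^n g(k)=\int_1^n g(x)\,dx+\frac{g(1)+g(n)}{2}+\frac{B_2}{2}\bigl(g'(n)-g'(1)\bigr)-\frac{1}{2}\int_1^n \tilde B_2(x)\,g''(x)\,dx,
\]
with $B_2=1/6$. If this exact form is inconvenient to cite, we can derive it directly by integrating by parts twice on each $[k,k+1]$.

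We then read off each term.
\begin{itemize}
\item The integral is $\frac{n^{\alpha+1}-1}{\alpha+1}$, giving the main term plus an $O(1)$ error.
\item The boundary average contributes $\frac{n^\alpha}{2}+O(1)$.
\item The $B_2$ term contributes $\frac{1}{12}\alpha n^{\alpha-1}+O(\alpha)$.
\end{itemize}
All $O(1)$ constants are bounded uniformly, since $\alpha\le 100$.

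The main obstacle is the remainder integral. A crude bound $|\tilde B_2|\le 1/6$ only gives $\int_1^n g''=O(n^{\alpha-1})$, which is not good enough. We fix this by integrating by parts once more with $\tilde B_3$: since $\tilde B_3$ is bounded and periodic, with $\tilde B_3(k)=0$ at integers, the boundary terms vanish. The remainder becomes $\frac{1}{6}\int_1^n \tilde B_3(x)g'''(x)\,dx$, whose absolute value is at most $C\int_1^n |g'''(x)|\,dx = C\,(g''(n)-g''(1))=O(\alpha^2 n^{\alpha-2})$, because $g'''\ge 0$ for $\alpha\ge 3$.

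Finally we collect everything. The constants $O(1)$ and $O(\alpha)$ are $O(n^{\alpha-2})$, since $\alpha-2\ge 3>0$ and $n\ge 1$. Hence the total error is $O(n^{\alpha-2})$, with the implied constant depending only on the bound $\alpha\le 100$. This yields the claimed expansion.
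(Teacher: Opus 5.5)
Your proof is correct, but it takes a different route from the paper. The paper never uses Euler--Maclaurin. It writes $\frac{n^{\alpha+1}}{\alpha+1}$ as the telescoping sum $\sum_{i=1}^n \frac{i^{\alpha+1}-(i-1)^{\alpha+1}}{\alpha+1}$ and Taylor-expands $1-(1-1/i)^{\alpha+1}$ up to an $O(1/i^4)$ error. This gives $\frac{n^{\alpha+1}}{\alpha+1} = S_\alpha - \frac{\alpha}{2}S_{\alpha-1} + \frac{\alpha(\alpha-1)}{6}S_{\alpha-2} + O(S_{\alpha-3})$ with $S_\beta = 1^\beta+\cdots+n^\beta$. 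It then solves for $S_\alpha$ by substituting cruder asymptotics for the shifted sums, taken from two auxiliary lemmas: a first-order estimate via integral comparison, and a second-order estimate via the same telescoping trick. So the paper gains precision one order at a time using only Taylor's theorem and monotone integral comparison. The cost is a chain of three lemmas and an $\alpha$-range that allows the lower-order lemmas to be applied to $\alpha-1,\alpha-2,\alpha-3$. Your argument is a single computation. It needs the Euler--Maclaurin formula with periodic Bernoulli remainder (or its derivation by integration by parts), plus the key extra step: integrating by parts once more against $\tilde B_3$, which vanishes at the integers, to push the remainder from $O(n^{\alpha-1})$ down to $O(n^{\alpha-2})$. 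In return, you make the uniformity of the implied constant over $\alpha\in[5,100]$ explicit. The paper leaves this implicit, even though it matters, since $\gamma$ depends on the input when the lemma is applied in Lemma~\ref{lem_bound}. Your method also extends mechanically to further correction terms. One minor inaccuracy: $x^\alpha$ is $C^\infty$ on $[1,\infty)$ for every real $\alpha$, so $\alpha\ge 5$ is not what gives smoothness. In your argument the lower bound on $\alpha$ is used only for $g'''\ge 0$ and for absorbing the $O(1)$ and $O(\alpha)$ terms into $O(n^{\alpha-2})$, and both already hold for $\alpha\ge 2$.
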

	
	We now establish our main result of this section.

	\begin{theorem}
		\label{thm_3layer2}
		Both in the full attention and the causally-masked attention models,
		there is a 4-layer transformer for PARITY with a standard-form input embedding, whose positional encoding is length-independent and polynomially bounded (the latter meaning that the $l_\infty$-norm of the positional encoding in position $i$ is bounded by some polynomial in $i$).
	\end{theorem}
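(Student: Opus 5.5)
The plan is to reduce PARITY to reading off the parity of a single position index. We compute the count $S=x_1+\dots+x_n$ at the last position, then let softmax attention concentrate on the position $j$ with $j=S$ (or $j$ closest to $S$), and copy from the positional encoding a coordinate equal to $j \bmod 2$, or rather to $\cos(\pi j)$ stored as $\pm 1$. Everything must be done with a length-independent encoding $g(i)$ whose coordinates are polynomial in $i$. The candidates are $1$, $i$, $i^2$, $i^\alpha$ for a fixed real $\alpha\in[5,100]$, $\log i$ (bounded by $i$), and a first-position flag $[i=1]$. Every step must work under causal masking; since only the last position produces the output, it suffices that the last position's computation uses attention over all of $1..n$.

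\textbf{Layer 1} uses uniform attention (zero query/key). Position $i$ then obtains $\frac{1}{i}\sum_{j\le i} x_j$ and $\frac1i\sum_{j\le i}[j=1]=1/i$. At the last position these give $s=S/n$ and $1/n$.

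The difficulty is that the natural selection logit $-\lambda (j-S)^2=-\lambda(j^2-2jS+S^2)$ requires a query proportional to $\lambda S$ with $\lambda$ growing with $n$. Layer 1 only gives us $S/n$ and $1/n$, and the FFN cannot multiply them by $n$ or $n^2$.

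\textbf{Layers 2 and 3} handle this rescaling. The key ingredient is attention with logits $\alpha\log j$: the query is the constant $\alpha$ and the key is the encoding coordinate $\log j$, which yields weights proportional to $j^\alpha$. Averaging value vectors such as $x_j$, $1$, $j$, $j^2$ under these weights produces quantities like $\sum_{j\le n} j^{\alpha} / \sum_{j \le n} j^{\alpha -1}$. By Lemma~\ref{lem_crazy3}, this equals $\frac{\alpha}{\alpha+1}\, n + c_0 + O(1/n)$ with explicit constants. Combining several such averages for different exponents via the (linear) FFN, and using the explicit second- and third-order Faulhaber terms to cancel the constant corrections, I expect to obtain at the last position a coordinate equal to $n+O(1/n)$, and similarly a coordinate approximating $S$. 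For the latter I would weight $x_j$ suitably, or use the first-layer prefix averages $S_j/j$ as values, whose weighted average can be expanded the same way. The goal is an approximation of $S$ with additive error $o(1)$, and of $n$ with error $o(1)$.

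\textbf{Layer 4} selects the position. The query at the last position is $(2\tilde S, -1)$ scaled by a constant $\lambda$, and the key at $j$ is $(j, j^2)$, so the logit is $\lambda(2\tilde S j - j^2)=-\lambda(j-\tilde S)^2+\text{const}$. With $|\tilde S - S|<1/4$, the gap between the best index $j=S$ and any other index is at least $\lambda/2$. So a fixed constant $\lambda$ makes the attention weight on $j=S$ at least, say, $0.9$, because the remaining weights decay like $e^{-\lambda (j-S)^2/2}$ and sum to a constant that we can make small. The value vector is $(-1)^j$ taken from the encoding. The head output thus has sign $(-1)^S$ with magnitude bounded away from $0$, and the output matrix maps it to tokens $0/1$. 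The edge case $S=0$ is handled by also attending to a dummy offset: use $j=S+1$ and flip the sign. This keeps positions in range $1..n$, since $S+1\le n+1$ fails only when all bits are $1$; we handle that case by selecting $j=S$ when $\tilde S > n-\tfrac12$, or more simply by selecting $j=n-S$ or similar via the approximated $n$. In the full-attention model the same layers work, because only the last position's computation matters and it already sees everything.

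The main obstacle is the middle step: producing, with length-independent polynomial encodings and no multiplication, an approximation of $S$ (not $S/n$) at the last position with error below $1/4$. This is why Lemma~\ref{lem_crazy3} is needed with explicit lower-order terms. First-order asymptotics give error $O(1)$, which is too coarse, so the cancellation of the $n^{\alpha}$ and $n^{\alpha-1}$ terms across two or three exponents is what I expect to require the most care.
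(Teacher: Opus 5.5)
There is a genuine gap, and it is the step you yourself flag as the main obstacle. Nothing in the proposal produces $\tilde S$ with $|\tilde S-S|<1/4$ at the last position, and the routes you sketch cannot do it.

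\textbf{Why your routes fail.} Averaging $x_j$ with $j$-dependent weights, or averaging the prefix means $S_j/j$, gives quantities that depend on \emph{where} the ones are, not only on $S$. (Under full attention, uniform attention in layer~1 gives $S/n$ at every position, so prefix means are not even available there. Your ``same layers'' claim for both models therefore also fails.) The arrangement-independent information one attention step extracts from the bits is essentially bounded, e.g.\ $S/n$ or $aS/(aS+b(n-S))$. To reach scale $n$ you must put such a quantity into an exponent, e.g.\ weights $j^{\theta}$ with $\theta$ a function of $S/n$. The output then has the form $n\,\phi(\theta)+\psi(\theta)+o(1)$ with $\phi$ nonlinear; for value $j$, $\phi(\theta)=\frac{\theta+1}{\theta+2}$. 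Recovering $S$ to additive accuracy $1/4$ from this means undoing $\phi$ at resolution $1/n$, i.e.\ multiplying an order-$n$ quantity by an input-dependent factor. A fixed ReLU FFN is piecewise linear with finitely many pieces, hence has bounded slopes, so it cannot do this. You do not say which further attention step would. Sharpening the Faulhaber expansion does not touch this issue. Also, approximating $n$ is unnecessary: $g(n)$ at the last position can contain $n$ exactly.

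\textbf{The missing idea.} The paper never computes $\Sigma$. Its three core layers work as follows.
\begin{itemize}
\item The first uses logits $(-\ln n+\delta)(1-x_i)$, giving weight $1$ on ones and $\alpha/n$ on zeros. This yields $\gamma=10/(1+\rho)$ with $\rho=\alpha(1/\Sigma-1/n)$.
\item The second uses weights $i^{\gamma}$ and values $i^{10}$ to get $\Gamma=\tau_n f(\rho)\bigl(1+O(\rho/n^2+1/n^3)\bigr)$ with $\tau_n\approx n^{10}$. This is where Lemma~\ref{lem_crazy3} enters.
\item The third uses keys $1/i$ and $1/i^2$, so that the logits approximate $-\tau_n\bigl(f(\rho)-f(0)-f'(0)\alpha(1/i-1/n)\bigr)^2$ up to an $i$-independent shift.
\end{itemize}
Thus $\Sigma$ is located as the maximiser of a concave quadratic in $1/i$, amplified by $n^{10}$. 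The nonlinearity of $f$ is absorbed by taking $\alpha$ small: the linearisation error is $O(\alpha^4/\Sigma^4)$ against a gap of $\Omega(\alpha^2/\Sigma^4)$. The resulting margin is $\Omega(\tau_n/\Sigma^4)=\Omega(n^6)$.

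The Faulhaber error terms are dominated only when $1\le\Sigma\le cn$. This is enforced by an extra FFN layer that splits $x$ into $M$ interleaved substrings, each with an inserted $1$, processed by $M$ heads in parallel. That is where the fourth layer goes, and it also disposes of your $S=0$ and $S=n$ edge cases.

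Your final selection layer is fine \emph{given} $\tilde S$. What is missing is the idea that makes the construction work: localising $\Sigma$ in $1/i$-coordinates with a huge scale factor, rather than computing it.
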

	\begin{proof}
		We will need to compute attention just in the last token, from the rest of the tokens we need just positional encoding and input bits. Thus, our construction will work both in the full attention and causally-masked attention models.
		
		Let $x_1 x_2\ldots x_n \in\{0, 1\}^n$ be the input word and let $\Sigma = x_1 + \ldots + x_n$. Let us give a proof  assuming that $1 \le \Sigma \le cn$ for some universal constant $c > 0$ to be defined later. Under this assumption, we will require just 3 layers.
		
		Our general plan is to use attention to obtain the following value at some point:
		\[z=\sum\limits_{i = 1}^n e^{L_{i, n}}(-1)^i / \left( \sum\limits_{i = 1}^n e^{L_{i, n}}\right), \]
		and a guarantee that a) $L_{i,n}$ is maximized at the position $i=\Sigma$; b) $L_{\Sigma,n}$ is much larger than $L_{j,n}$ whenever $j\neq \Sigma$---larger enough to guarantee that $z$ is positive if $\Sigma$ is even and negative otherwise. 

		At the first layer, we compute the weighted sum of inputs bits, where the weight of the positions with 1 is 1, and the weight of the positions with 0 is $\alpha/n$, for some constant $\alpha \in (0, 1)$ to be specified later. Indeed, we can have $\ln n$ at position $n$ from the positional encoding. Thus, we can get attention weights of the form $L_{i,n} = (-\ln(n) + \delta)(1 - x_i)$, where $\delta$ is such that $e^\delta = \alpha$. This will allow us to compute the following expression in the first layer:
		\[\gamma = \frac{10\Sigma}{\Sigma + (\alpha/n) (n - \Sigma)} = \frac{10}{1 + \rho}, \qquad \text{where }\rho = \alpha(1/\Sigma - 1/n).\]
		Note that $0\le \rho \le \alpha < 1$, meaning that $5 \le \gamma \le 10$
		
		At the second layer, using the positional encoding $i\mapsto (\ln i, i^{10})$, and the already computed value of $\gamma$, we can compute the following expression:
		\[\Gamma = \frac{1^{\gamma} \cdot 1^{10} + \ldots + n^{\gamma}\cdot n^{10}}{1^{\gamma} + \ldots + n^{\gamma}}\]
		(using attention weights $L_{i,n} = \ln(i) \cdot  \gamma$).
		\begin{lemma}
			\label{lem_bound}
			\[\Gamma = \tau_n \cdot f(\rho) \cdot \left(1+ O\left(\frac{\rho }{n^2} + \frac{1}{n^3}\right)\right),\]
			where $\tau_n = n^{10}\left(1 + \frac{5}{n} - \frac{5}{3n^2}\right)$ and $f(\rho) = \frac{11 + \rho}{21 + 11\cdot \rho}$.
		\end{lemma}
		\begin{proof}
			Elaborating on the expression for $\Gamma$ with the use of Lemma \ref{lem_crazy3}, since $\gamma, \gamma + 11 \in [5, 100]$, we get:
			\begin{align*}
				\Gamma &= \frac{\frac{n^{\gamma + 11}}{\gamma + 11} \left(1 + \frac{\gamma + 11}{2n} + \frac{(\gamma + 11)(\gamma + 10)}{12n^2} + O\left(\frac{1}{n^3}\right)\right)}{\frac{n^{\gamma + 1}}{\gamma + 1} \left(1 + \frac{\gamma + 1}{2n} + \frac{(\gamma + 1)\gamma}{12n^2} + O\left(\frac{1}{n^3}\right)\right)} \\ &= n^{10}\cdot \frac{\gamma + 1}{\gamma + 11} \cdot \frac{\left(1 + \frac{\gamma + 11}{2n} + \frac{(\gamma + 11)(\gamma + 10)}{12n^2} + O\left(\frac{1}{n^3}\right)\right)}{\left(1 + \frac{\gamma + 1}{2n} + \frac{(\gamma + 1)\gamma}{12n^2} + O\left(\frac{1}{n^3}\right)\right)}.
			\end{align*}
			Observe that $\frac{\gamma + 1}{\gamma + 11} = \frac{\frac{10}{1 + \rho} + 1}{\frac{10}{1 + \rho} + 11} = \frac{11 + \rho}{21 + 11\rho} = f(\rho)$.
			Let us now work separately with the fraction in the last expression. 
			\begin{align*}
				&\frac{\left(1 + \frac{\gamma + 11}{2n} + \frac{(\gamma + 11)(\gamma + 10)}{12n^2} + O\left(\frac{1}{n^3}\right)\right)}{\left(1 + \frac{\gamma + 1}{2n} + \frac{(\gamma + 1)\gamma}{12n^2} + O\left(\frac{1}{n^3}\right)\right)} = \left(1 + \frac{\gamma + 11}{2n} + \frac{(\gamma + 11)(\gamma + 10)}{12n^2} + O\left(\frac{1}{n^3}\right)\right) \cdot \\
				&\left(1 - \frac{\gamma + 1}{2n} - \frac{(\gamma + 1)\gamma}{12n^2} + \frac{(\gamma + 1)^2}{4n^2} +  O\left(\frac{1}{n^3}\right)\right) \\
				&= 1 + \frac{5}{n} + \frac{1}{12n^2}\Big((\gamma + 11)(\gamma +10)-3(\gamma + 1) (\gamma + 11)  - (\gamma + 1) \gamma + 3(\gamma + 1)^2\Big) + O\left(\frac{1}{n^3}\right) \\
				&=  1 + \frac{5}{n} + \frac{80 - 10 \gamma}{12n^2} +  O\left(\frac{1}{n^3}\right) =  1 + \frac{5}{n} + \frac{80 -\frac{100}{1 + \rho} }{12n^2} + O\left(\frac{1}{n^3}\right) \\&=  1 + \frac{5}{n} + \frac{80 -100 + O(\rho) }{12n^2} + O\left(\frac{1}{n^3}\right)=  1 + \frac{5}{n}  - \frac{5}{3n^2} + O\left(\frac{\rho }{n^2} + \frac{1}{n^3}\right) \\
				&=\left(1 + \frac{5}{n}  - \frac{5}{3n^2}\right) \left(1 + O\left(\frac{\rho }{n^2} + \frac{1}{n^3}\right)\right),
			\end{align*}
			and the lemma follows.  \end{proof}

		\begin{lemma}
			\label{lem_aprox}
			Let $f(\rho) =  \frac{11 + \rho}{21 + 11\cdot \rho}$ be the function from Lemma \ref{lem_bound}.
			For $i \in \{1, \ldots, n\}$, define
			\[W_i = -\left(f(\rho) - f(0) - f'(0) \cdot \alpha (1/i - 1/n) \right)^2.\]
			There for all small enough $\alpha \in (0, 1)$, for all $n$ and $\Sigma \in\{1, 2, \ldots, n\}$, we have:
			\begin{itemize}
				\item $W_\Sigma \ge - O(\alpha^4/\Sigma^4)$;
				\item $W_i \le - \Omega\left(\alpha^2 (1/i - 1/\Sigma)^2\right)$ for all $i\neq \Sigma$.
			\end{itemize}
		\end{lemma}
		\begin{proof}
			The function $f(\rho)$ is infinitely differentiable at $(-1, +\infty)$,  meaning that
			\[f(\rho) = f(0) + f'(0)  \rho + O(\rho^2) \qquad \text{ as } \rho \to 0.\]
			Importantly, $f'(0) \neq 0$ as a direct calculation shows that $f'(0) = -100/441$. Recall that $\rho = \alpha(1/\Sigma - 1/n) \le \alpha/\Sigma)$. Thus, $W_\Sigma = -(O(\rho^2))^2 = - O(\alpha^4/\Sigma^4)$.
			In turn, for any $i \neq \Sigma$, we obtain:
			\[-W_i = -\left(f'(0)(\frac{\alpha}{\Sigma} - \frac{\alpha}{i}) + O(\alpha^2/\Sigma^2) \right)^2 =  -\left(\Omega(\alpha(1/i - 1/\Sigma)) + O(\alpha^2/\Sigma^2) \right)^2.\]
			For small enough $\alpha$, the term $\Omega(\alpha(1/i - 1/\Sigma)) = \Omega(\alpha/\Sigma^2)$ dominates the $O(\alpha^2/\Sigma^2)$ term, and the lemma follows.
		\end{proof}
		Our plan now is to devise, at the third layer, attention weights $L_{i,n}$ that are proportional to $W_i$, multiplied by a large factor (so that attention will be mostly concentrated at the position $i = \Sigma$). 

		Note that 
		\[W_i = -(f(\rho) + C/i + A_n)^2\]
		for some absolute constant $C$ and some expression $A_n$, depending only on $n$. Elaborating on this further, we get:
		\[W_i = -2f(\rho) \cdot  (C/i) - (C/i)^2  - 2 A_n \cdot (C/i) + B_{n,\rho}\]
		for some expression $B_{n,\rho}$ that does not depend on $i$.

		We will get attention weights that are very close to  this expression without $B_{n,\rho}$, multiplied by the factor $\tau_n =  n^{10}\left(1 + \frac{5}{n} - \frac{5}{3n^2}\right)$ from Lemma \ref{lem_bound}:
		\begin{align*}
			L_{i, n}' &= \tau_n (W_i - B_{n,\rho}) = \tau_n( -2f(\rho) \cdot  (C/i) - (C/i)^2  - 2 A_n \cdot (C/i)) \\
			&=-2\tau_n \cdot f(\rho) \cdot (C/i) - \tau_n (C/i)^2 -  2 \tau_n A_n \cdot (C/i).
		\end{align*}

		Will not be able to get attention weights exactly $L_{i,n}'$ in  the dot-product attention. The problem is with the term $\tau_n f(\rho)$  which is not yet computed. However, at the $n$-th position we have computed  $\Gamma$, which, by Lemma \ref{lem_bound}, satisfies $\Gamma =  \tau_n \cdot f(\rho) \cdot ( 1 + O(\frac{\rho }{n^2} + \frac{1}{n^3}))$ and thus is really close to $\tau_n\cdot f(\rho)$. In turn, there will be no problem with terms $\tau_n (C/i)^2$ and $2 \tau_n A_n \cdot (C/i)$ as these can be obtained from the dot-product attention using the positional encoding $i \mapsto (1/i, 1/i^2, \tau_i, \tau_i A_i)$. 
		That is, our attention weights at the third layer will be:
		\[L_{i,n} = -2\Gamma \cdot  (C/i)  -\tau_n (C/i)^2 -  2 \tau_n A_n \cdot (C/i).\]
		
		Recall that we assume that $\Sigma \le cn$ for some absolute constant $C> 0$ to be chosen later.
		To finish the proof, we just need to show
		\begin{lemma}
			There exist $\alpha > 0, c > 0$ such that for all large enough $n$ and all $\Sigma$, we have that $L_{\Sigma, n} \ge L_{i,n} + \Omega(n^{6})$ for all $i \neq \Sigma$.
		\end{lemma}
		\begin{proof}
			Note that    \begin{align*}
				L_{i, n} &= -2\tau_n \cdot f(\rho)  (1 + O(\rho/n^2 +1/n^3))\cdot (C/i) - \tau_n (C/i)^2 -  2 \tau_n A_n \cdot (C/i) \\
				&= L_{i,n}' + O(\tau_n \cdot \left(\frac{\rho}{n^2 \cdot i} + \frac{1}{n^3i}\right)) = \tau_n (W_i - B_{n,\rho}) +   O(\tau_n \cdot \left(\frac{\rho}{n^2 \cdot i} + \frac{1}{n^3i}\right)). 
			\end{align*}
			By Lemma \ref{lem_aprox}, for all $\alpha > 0$ small enough we get:
			\begin{equation}
				\label{eq_sigma}
				L_{\Sigma, n} \ge \tau_n B_{n,\rho}- O(\tau_n \alpha^4/\Sigma^4 )  +  O(\tau_n \cdot \left(\frac{\rho}{n^2 \cdot \Sigma} + \frac{1}{n^3\Sigma}\right)) = \tau_n B_{n,\rho} - O(\tau_n E_1) + O(\tau_n E_2),
			\end{equation}
			\begin{equation}
				\label{eq_notsigma}
				L_{i, n} \le \tau_n B_{n,\rho}- \Omega(\tau_n \frac{\alpha^2(i - \Sigma)^2}{i^2 \Sigma^2} )  +  O(\tau_n \cdot \left(\frac{\rho}{n^2 \cdot i} + \frac{1}{n^3i}\right)) = \tau_n B_{n,\rho} - \Omega(\tau_n E_3) + O(\tau_n E_4).
			\end{equation}
			
			We show that by taking $\alpha, c$ to be small enough, we can make  $E_3/E_1, E_3/E_2, E_3/E_4$ arbitrarily large. This will imply that $L_{\Sigma, n}$ is larger by at least $\Omega(\tau_n E_3) = \Omega((1/\Sigma - 1/i)^2\tau_n) = \Omega(\tau_n/\Sigma^4) = \Omega(n^6)$, as required.
			
			We first fix $\alpha$ so that $E_3$ is any given constant time larger than $E_1$. This is possible because $E_3$ is at least $\Omega(\alpha^2/\Sigma^4)$ while $E_1 = O(\alpha^4/\Sigma^4)$. 
			
			We now consider $\alpha$ as fixed. Then $E_3 = \Omega(1/\Sigma^4)$. In turn, $E_2 = O(\left(\frac{\rho}{n^2 \cdot \Sigma} + \frac{1}{n^3\Sigma}\right)) = O(1/(n^2 \Sigma^2))$ (recall that $\rho \le \alpha/\Sigma = O(1/\Sigma)$). Thus, $E_3/E_1 = \Omega(n^2/\Sigma^2)$. Choosing $c$ in $\Sigma \le cn$ small enough makes the fraction $E_3/E_1$ arbitrarily large.
			
			Likewise, considering $E_3/E_4$, since $E_4 = O(\left(\frac{\rho}{n^2 \cdot i} + \frac{1}{n^3i}\right)) = O(1/n^2\Sigma i)$ as $\rho = O(1/\Sigma)$,
			we get up to a fixed constant factor: \[E_3/E_4 \ge ((i - \Sigma)^2/(i^2\Sigma^2)/(1/(n^2\Sigma i)) = (i - \Sigma)^2 \cdot \frac{n^2}{\Sigma i} \ge n/\Sigma,\]
			where the latter is bacause $(i - \Sigma)^2\ge 1, n/i \ge 1$.
			Again, by choosing $c$ sufficiently small, we can make this fraction arbitrarily large.
		\end{proof}
		Hence, the maximum of $L_{i,n}$ is  achieved at $i = \Sigma$, with all the other values being $\Omega(n^{6})$ smaller. We then are able to compute the expression:
		\[z = \sum\limits_{i = 1}^n e^{L_i, n}(-1)^i / \left( \sum\limits_{i = 1}^n e^{L_i, n}\right), \]
		which will be, say, $0.1$-close to $(-1)^\Sigma$. In particular, it will be positive for even $\Sigma$ and negative for odd $\Sigma$. Thus, in the output distribution, it remains to put value $z$ to the token 0, and value $-z$ to the token 1.
		
		Finally, we explain how to get rid of the assumption $0 < \Sigma \le cn$ for some small constant $c > 0$.  We take an even integral number $M > 2/c$. Given an input $x \in \{0, 1\}^n$, in the first layer we compute strings $x^0, \ldots, x^{M - 1}$, where $x^r$ coincides with $x$ on positions $i$ with $i \equiv r \pmod{M}$ and is equal to $0$ elsewhere, except of the position $r + 1$ where it has 1. Thus, we have the following expressions for the bits of $x^r$:
		\[x^r_i = \mathrm{ReLU}\Big(x_i + \mathbb{I}\{i  \equiv r \pmod{M}\}-1\Big)+ \mathbb{I}\{i = r+1\},\]
		which can be computed via FFNs of the first layer (indicators can be taken from the positional encoding). Note that $PARITY(x) = PARITY(x^0) \oplus \ldots \oplus PARITY(x^{M - 1})$ because $M$ is even. Moreover, for each $r = 0, \ldots, M -1$, the number of 1s in $x^r$ is at least 1 and at most $1 + n/M < cn$. Hence, in the next 3 layers we can compute the parities of  $x^0, \ldots, x^{M -1}$ in parallel, using $M$ attention heads and the construction above.  More precisely, we can compute $M$ numbers $z^0, \ldots, z^{M-1}$ such that $z^r$ is $\epsilon$-close to 1 if $PARITY(x^r) = 0$, and $\epsilon$-close to $-1$ if $PARITY(x^r) = 1$ (here $\epsilon>0$ can be made arbitrarily small if $n$ is large enough). Thus, the parity of $x$ will be $0$ if and only if there is an even number of numbers close to $-1$ among $z^0, \ldots, z^{M-1}$. In the FFN of the final layer, it now suffices to sum up expressions of the form $\mathrm{ReLU}(-z^0 -\ldots - z^{M-1} - M + 0.1)$ and all the similar ones where the number of minuses before $z^r$'s is even. This sum will be at least some positive constant if the parity of $x$ is $0$, and 0 otherwise. 
	\end{proof}

	\appendix
	
	\section{Proof of Lemma \ref{lem_crazy}}
	\label{sec:app_crazy}
	
	The proof is via a series of lemmas.
	
	\begin{lemma}
		\label{lem_crazy}
		For $\alpha \in [0, 100]$, and $n\in\mathbb{N}$, we have:
		\[1^\alpha + \ldots + n^{\alpha} = \frac{n^{\alpha + 1}}{\alpha + 1} + O(n^\alpha).\]
	\end{lemma}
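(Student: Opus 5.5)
We compare the sum with the integral of $t^\alpha$, using that $t \mapsto t^\alpha$ is nondecreasing on $[0,\infty)$ for $\alpha \ge 0$. The case $\alpha = 0$ is trivial: the sum equals $n = \frac{n^{1}}{1}$. So we assume $\alpha \in (0, 100]$, where $t^\alpha$ is continuous and increasing on $[0, n]$ and $[1, n+1]$.

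First, the lower bound. For each $k \ge 1$, monotonicity gives
\[
\int_{k-1}^{k} t^\alpha\,dt \le k^\alpha .
\]
Summing over $k = 1, \ldots, n$ yields
\[
\frac{n^{\alpha+1}}{\alpha+1} = \int_0^n t^\alpha\,dt \le 1^\alpha + \ldots + n^\alpha .
\]

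Second, the upper bound. For each $k \ge 1$, monotonicity also gives
\[
k^\alpha \le \int_{k}^{k+1} t^\alpha\,dt ,
\]
so summing gives
\[
1^\alpha + \ldots + n^\alpha \le \int_1^{n+1} t^\alpha\,dt \le \int_0^{n+1} t^\alpha\,dt = \frac{(n+1)^{\alpha+1}}{\alpha+1}.
\]
A cleaner alternative is to write $\sum_{k=1}^n k^\alpha \le \int_0^n t^\alpha\,dt + n^\alpha$. This holds because $\sum_{k=1}^{n-1} k^\alpha \le \int_1^n t^\alpha\,dt$, and adding the last term $n^\alpha$ gives the claim. Either way, the error is at most $n^\alpha$.

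Combining the two bounds, we get
\[
0 \le 1^\alpha + \ldots + n^\alpha - \frac{n^{\alpha+1}}{\alpha+1} \le n^\alpha .
\]
This gives the claimed $O(n^\alpha)$ with implied constant $1$, uniformly in $\alpha \in [0,100]$ and $n \in \mathbb{N}$.

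The only point needing care is that the constant in $O(\cdot)$ must be uniform in $\alpha$, since Lemma \ref{lem_crazy3} will apply this with $\alpha$ depending on the input (through $\gamma$). The explicit bound $n^\alpha$ above already gives this uniformity, so no obstacle arises. This estimate is the base step for the finer expansion in Lemma \ref{lem_crazy3}. That expansion can then be obtained by applying the same comparison to the Euler--Maclaurin remainder, or by telescoping $(k+1)^{\alpha+1} - k^{\alpha+1}$ via Taylor expansion and feeding in the present lemma for lower powers.
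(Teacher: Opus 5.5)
Your proof is correct and is essentially the paper's argument: the same monotone integral sandwich $\int_0^n t^\alpha\,dt \le \sum_{k\le n} k^\alpha \le \int_1^{n+1} t^\alpha\,dt$. The one difference is the finish: the paper Taylor-expands $(n+1)^{\alpha+1}-n^{\alpha+1}$, while your peel-off-the-last-term variant gives the explicit bound $0 \le \sum_{k\le n} k^\alpha - \frac{n^{\alpha+1}}{\alpha+1} \le n^\alpha$, which makes the uniformity in $\alpha$ (left implicit in the paper's $O(1/n^2)$) transparent. One harmless slip: the first route only bounds the error by $\int_n^{n+1} t^\alpha\,dt \le (n+1)^\alpha \le 2^{100} n^\alpha$, not by $n^\alpha$, so your ``either way'' remark holds only up to that constant.
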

	\begin{proof}
		Observe that:
		\[\frac{n^{\alpha + 1}}{\alpha + 1} = \int\limits_0^n x^\alpha dx \le 1^\alpha + \ldots + n^{\alpha} \le  \int\limits_1^{n+1} x^\alpha dx \le \frac{(n+1)^{\alpha + 1}}{\alpha + 1}\]
		(using monotonicity of the function under integral since $\alpha \ge 0$).
		It remains to observe that 
		\begin{align*}
			(n+1)^{\alpha + 1} - n^{\alpha + 1} = n^{\alpha + 1} \left(\left(1 + \frac{1}{n}\right)^{\alpha + 1} - 1 \right) = n^{\alpha + 1} \cdot \left(\frac{\alpha + 1}{n} + O(1/n^2) \right) = O(n^{\alpha}).
		\end{align*}
	\end{proof}
	
	\begin{lemma}
		\label{lem_crazy2}
		For $\alpha \in [2, 100]$, and $n\in\mathbb{N}$, we have:
		\[1^\alpha + \ldots + n^{\alpha} = \frac{n^{\alpha + 1}}{\alpha + 1} + \frac{n^{\alpha}}{2} + O(n^{\alpha -1}).\]
	\end{lemma}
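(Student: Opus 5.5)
The plan is to refine the integral comparison used in Lemma \ref{lem_crazy} by replacing the crude sandwich between two integrals with the trapezoidal rule, whose error is controlled by the second derivative. Write $g(x) = x^\alpha$. For each $k = 1, \ldots, n-1$, the trapezoidal error on $[k, k+1]$ is
\[
\int_k^{k+1} g(x)\,dx - \frac{g(k) + g(k+1)}{2} = -\frac{1}{12} g''(\xi_k)
\]
for some $\xi_k \in (k, k+1)$. Since $\alpha \ge 2$, $g'' (x)= \alpha(\alpha-1)x^{\alpha-2}$ is nonnegative and monotone nondecreasing on $[1, \infty)$, so $|g''(\xi_k)| \le \alpha(\alpha-1)(k+1)^{\alpha-2} \le 100^2 (k+1)^{\alpha - 2}$. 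The constant is uniform because $\alpha \le 100$.

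Summing over $k = 1, \ldots, n-1$ gives
\[
\sum_{k=1}^n k^\alpha = \int_1^n x^\alpha\,dx + \frac{1 + n^\alpha}{2} + R_n,
\qquad
|R_n| \le \frac{\alpha(\alpha-1)}{12}\sum_{k=2}^{n} k^{\alpha-2}.
\]
By Lemma \ref{lem_crazy}, applied with exponent $\alpha - 2 \in [0, 98]$, the last sum is at most $\frac{n^{\alpha-1}}{\alpha-1} + O(n^{\alpha-2})$. Hence $R_n = O(n^{\alpha-1})$, with a constant uniform in $\alpha$.

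It remains to evaluate $\int_1^n x^\alpha\,dx = \frac{n^{\alpha+1}}{\alpha+1} - \frac{1}{\alpha+1}$ and to absorb the constants $-\frac{1}{\alpha+1} + \frac12$ into $O(n^{\alpha-1})$. This absorption is valid since $n^{\alpha-1} \ge 1$ for $\alpha \ge 2$ and $n \ge 1$. Together these give exactly $\frac{n^{\alpha+1}}{\alpha+1} + \frac{n^\alpha}{2} + O(n^{\alpha-1})$.

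The only delicate point is keeping all implicit constants uniform over $\alpha \in [2, 100]$. This matters because the lemma is later used with $\alpha$ depending on the input, namely $\alpha = \gamma$ or $\gamma + 11$. Uniformity follows because every constant above is bounded in terms of $\alpha(\alpha-1) \le 10^4$ and the uniform constant from Lemma \ref{lem_crazy}. If one prefers to avoid the trapezoidal error formula, the alternative is to bound $\int_k^{k+1} g - \frac{g(k)+g(k+1)}{2}$ directly. By convexity this quantity is nonpositive. It is bounded in absolute value by $\frac{1}{2}\bigl(g(k+1)-g(k)\bigr) - \int_k^{k+1}(g(x)-g(k))\,dx$, which is at most $\frac12 \max_{[k,k+1]} g'' $ by a Taylor expansion. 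This leads to the same $O(n^{\alpha-1})$ bound.
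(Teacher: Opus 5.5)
Your proof is correct, but it takes a different route from the paper's.

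\textbf{The paper's route.} The paper telescopes $\frac{n^{\alpha+1}}{\alpha+1}=\sum_{i=1}^n \frac{i^{\alpha+1}-(i-1)^{\alpha+1}}{\alpha+1}$ and expands $(1-1/i)^{\alpha+1}$ to second order. Writing $S_\beta = 1^\beta+\dots+n^\beta$, this gives $S_\alpha = \frac{n^{\alpha+1}}{\alpha+1}+\frac{\alpha}{2}S_{\alpha-1}+O(S_{\alpha-2})$. It then feeds Lemma~\ref{lem_crazy} back in for $S_{\alpha-1}$ and $S_{\alpha-2}$. The $\frac{n^\alpha}{2}$ term thus arises indirectly, as $\frac{\alpha}{2}\cdot\frac{n^\alpha}{\alpha}$.

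\textbf{Your route.} You use the trapezoidal rule instead. The $\frac{n^\alpha}{2}$ term then falls out directly from the endpoint correction $\frac{g(1)+g(n)}{2}$, and the error is controlled explicitly by $g''$.

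\textbf{What each buys.} Your estimate is one-shot rather than recursive. You could even drop Lemma~\ref{lem_crazy} entirely: since $\sum_{k=2}^n k^{\alpha-2}\le n^{\alpha-1}$ for $\alpha\ge 2$, you get $|R_n|\le \frac{\alpha(\alpha-1)}{12}\,n^{\alpha-1}$ outright. This makes uniformity over $\alpha\in[2,100]$ completely transparent, which the later application with $\alpha=\gamma,\gamma+11$ requires. The paper's telescoping scheme, on the other hand, is built to iterate. Keeping one more Taylor term and reusing Lemma~\ref{lem_crazy2} for $S_{\alpha-1}$ immediately yields Lemma~\ref{lem_crazy3}. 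Pushing your method one order further would require the next Euler--Maclaurin correction $\frac{g'(n)-g'(1)}{12}$, which produces the $\frac{\alpha n^{\alpha-1}}{12}$ term, together with a remainder bounded via $\int_1^n |g'''|$. That is standard, but it is somewhat more machinery than the paper's recursion.
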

	\begin{proof}
		Observe that:
		\begin{align*}
			\frac{n^{\alpha + 1}}{\alpha + 1} &= \sum\limits_{i = 1}^n \frac{i^{\alpha + 1} - (i-1)^{\alpha+ 1}}{\alpha + 1} =   \sum\limits_{i = 1}^n \frac{i^{\alpha + 1} \cdot \left(1 - \left(1 - \frac{1}{i}\right)^{\alpha + 1}\right)}{\alpha + 1} \\
			&= \sum\limits_{i = 1}^n \frac{i^{\alpha + 1} \left(\frac{\alpha + 1}{i} - \frac{(\alpha + 1)\alpha}{2i^2} + O\left(\frac{1}{i^3}\right)\right)}{\alpha + 1} = \sum\limits_{i = 1}^n i^\alpha - \frac{\alpha}{2}\sum\limits_{i = 1}^n i^{\alpha - 1} + O\left(\sum\limits_{i = 1}^n i^{\alpha - 2}\right)
		\end{align*}
		Using Lemma \ref{lem_crazy} for $\alpha -1$ and $\alpha - 2$, we get:
		\begin{align*}
			1^\alpha + \ldots + n^{\alpha} = \frac{n^{\alpha + 1}}{\alpha +1} + \frac{n^{\alpha}}{2} + O(n^{\alpha - 1}),
		\end{align*}
		as required.
	\end{proof}
	
	We finally get to the proof of Lemma \ref{lem_crazy3}.
	Similarly to the previous proof, we get: 
	\begin{align*}
		\frac{n^{\alpha + 1}}{\alpha + 1} &=  \sum\limits_{i = 1}^n \frac{i^{\alpha + 1} \cdot \left(1 - \left(1 - \frac{1}{i}\right)^{\alpha + 1}\right)}{\alpha + 1} \\ &= \sum\limits_{i = 1}^n \frac{i^{\alpha + 1} \cdot \left(\frac{\alpha + 1}{i} - \frac{(\alpha + 1)\alpha}{2i^2} + \frac{(\alpha + 1)\alpha (\alpha - 1)}{6i^3}+ O\left(\frac{1}{i^4}\right)\right)}{\alpha + 1} \\
		&= S_\alpha  - \frac{\alpha}{2} S_{\alpha - 1} + \frac{\alpha(\alpha -1)}{6} S_{\alpha - 2} + O(S_{\alpha -3}),
	\end{align*}
	where $S_\beta = 1^\beta +\ldots + n^{\beta}$. Using previous lemmas, we get:
	\begin{align*}
		S_\alpha &= \frac{n^{\alpha + 1}}{\alpha +1} + \frac{\alpha}{2} S_{\alpha - 1} -\frac{\alpha(\alpha - 1)}{6} S_{\alpha - 2} + O(S_{\alpha - 3}) \\
		&= \frac{n^{\alpha + 1}}{\alpha +1} + \frac{\alpha}{2} \left(n^{\alpha}/\alpha + n^{\alpha - 1}/2 + O(n^{\alpha - 2})\right) \\&- \frac{\alpha(\alpha - 1)}{6} (n^{\alpha - 1}/(\alpha - 1) + O(n^{\alpha -2})) + O(n^{\alpha -2}) \\
		&= \frac{n^{\alpha + 1}}{\alpha +1} + \frac{n^{\alpha}}{2} + \frac{\alpha n^{\alpha - 1}}{12} + O(n^{\alpha - 2}).
	\end{align*}
	
\end{document}